\newtheorem{lemma}{Lemma}
\newtheorem{theorem}{Theorem}
\algnewcommand\algorithmicforeach{\textbf{for each}}
\title{LinearAPT: An Adaptive Algorithm for the Fixed-Budget Thresholding Linear Bandit Problem}
\author{Yun-Ang Wu  \\
    r11944072@ntu.edu.tw \\
    National Taiwan University
    \And
    Yun-Da Tsai \\
    f08946007@csie.ntu.edu.tw\\
    National Taiwan University
    \And
    Shou-De Lin \\
    sdlin@csie.ntu.edu.tw\\
    National Taiwan University
}
\begin{document}

\maketitle

\begin{abstract}
    In this study, we delve into the Thresholding Linear Bandit (TLB) problem, a nuanced domain within stochastic Multi-Armed Bandit (MAB) problems, focusing on maximizing decision accuracy against a linearly defined threshold under resource constraints. We present LinearAPT, a novel algorithm designed for the fixed budget setting of TLB, providing an efficient solution to optimize sequential decision-making. This algorithm not only offers a theoretical upper bound for estimated loss but also showcases robust performance on both synthetic and real-world datasets. Our contributions highlight the adaptability, simplicity, and computational efficiency of LinearAPT, making it a valuable addition to the toolkit for addressing complex sequential decision-making challenges.
\end{abstract}

\section{Introduction}

Stochastic Multi-Armed Bandit (MAB) problems have long been pivotal in machine learning, aimed at optimizing sequential decision-making to maximize cumulative rewards. This domain has expanded to include the Thresholding Bandit Problem (TBP), which focuses on identifying options surpassing a certain performance threshold, catering to a wide array of applications such as binary classification and anomaly detection, and highlighting the importance of decision-making in uncertain environments.

Our research focus on the less explored Thresholding Linear Bandit (TLB) problem, where decisions are assessed against a specific threshold, and the mean rewards of arms adhere to a linear model with undetermined parameters. The study of TLB unfolds under two principal scenarios: the fixed budget setting, where the objective is to identify actions above the threshold within a limited number of trials, and the fixed confidence setting, which aims at minimizing trial numbers to achieve a certain confidence level in decision accuracy.

In addressing the fixed budget scenario of TLB, we introduce a novel algorithm specifically crafted for this context. This method underlines efficient exploration and exploitation within set trial limits, strategically optimizing decision-making under resource constraints. By focusing on the fixed budget scenario, our contribution presents a pragmatic approach to the TLB dilemma, enriching the arsenal for navigating complex sequential decision-making challenges.


Our contributions can be summarized as follows:
\begin{itemize}
    \item We introduce an algorithm, LinearAPT, tailored for the novel fixed-budget thresholding bandit setting, accompanied by a theoretical upper bound for estimated loss.
    \item Our algorithm demonstrates competitive performance across synthetic and real-world datasets. It exhibits adaptability, simplicity of implementation, and computational efficiency, bolstering its utility and effectiveness.
\end{itemize}



\section{Related Works}
\label{sec:related}

Table~\ref{tab:problem-setting} succinctly summarizes the landscape of research on various TBP variants, highlighting our contribution as the first to provide an upper bound for the fixed budget linear thresholding bandit problem.

\subsection{Unstructured Thresholding Bandit Problems}
Both the fixed budget and fixed confidence settings of the unstructured TBP have been extensively explored. Works such as those by \cite{NIPS2014_e56954b4} and \cite{pmlr-v48-locatelli16} provide foundational studies in these areas. Additionally, variations of the unstructured TBP, including studies on aggregate regret \cite{NEURIPS2019_9a0684d9} and shape-constrained models beneficial for applications like medical trials \cite{garivier2018thresholding,pmlr-v125-cheshire20a,pmlr-v139-cheshire21a}, have expanded the understanding of TBP under specific constraints, such as monotonically increasing mean rewards.

\subsection{Structured Thresholding Bandit Problems}
The structured TBP, including linear models and graph bandit cases, has seen significant developments. Linear pure exploration \cite{pmlr-v119-degenne20a} and graph bandit TBP \cite{pmlr-v108-lejeune20a} under both fixed budget and fixed confidence settings have been studied, offering algorithms that are asymptotically optimal in certain scenarios. Notably, our work is the first to address linear TBP in the fixed budget setting, filling a gap in the current literature.


\subsection{Level Set Estimation}
The study of level set estimation \cite{10.5555/2540128.2540322} presents another angle on structured TBP, focusing on classifying elements of a domain based on their relation to a threshold. This approach models the underlying function as a sample from a Gaussian process, offering insights into structured TBP where the function exhibits specific characteristics, such as linearity.

\begin{table}[t]
\centering
    \bgroup
    \def\arraystretch{1.5}
    \begin{tabular}{ c||c|c } 
         & Fixed Budget & Fixed confidence  \\ 
        \hline\hline
        Unstructured & \cite{pmlr-v48-locatelli16} & \cite{NIPS2014_e56954b4} \\ 
        \hline
        Linear & Our Paper & \cite{pmlr-v119-degenne20a} \\ 
    \end{tabular}
    \egroup
\caption{A list of papers presenting results on various variants of the thresholding bandit problem. As per our current knowledge, this paper is the first to provide an upper bound for the fixed budget linear thresholding bandit problem.}
\label{tab:problem-setting}
\end{table}


\section{Problem Definition}

\begin{table}[htbp]
    \begin{center}
        \begin{tabular}{r c p{10cm} }
        \toprule
        $K$ & Number of arms \\
        $\mathcal{A}$ &  set of arms \\
        $d$ &  Dimension of arm vectors \\
        $\theta$ &  Regression parameter \\
        $\hat{\theta}_t$ &  Estimated regression parameter at round $t$\\
        $L$ &  Upper bound of arm vector \\
        $\epsilon$ &  Precision \\
        $\tau$ &  Threshold \\
        $T$ & Budget \\
        $x_i$ &  Arm vector of arm $i$ \\
        $x_{\phi(t)}$ &  Arm selected at round $t$ \\
        $r_t$ &  Returned reward at round $t$ \\
        $\mu_i$ &  Mean reward of arm $i$ \\
        $\hat{\mu}_i(t)$ &  Estimated reward of arm $i$ at round $t$ \\
        $\Delta_i$ & Gap for arm $i$, defined as $\lvert \mu_i - \tau \rvert + \epsilon$   \\
        $\hat{\Delta}_i(t)$ & Estimated gap for arm $i$ at round $t$, defined as $\lvert \hat{\mu}_i(t) - \tau \rvert + \epsilon$   \\
        $H$ & Complexity, defined as $\sum_{i=1}^K {\Delta_i^{-2}}$ \\
        $T_i(t)$ & Number of pulls up to time $t$ for arm $i$ \\
        $B_i(t)$ & $\sqrt{T_{i}(t-1)}\hat{\Delta}_i(t-1)$ \\
        
        \bottomrule
        \end{tabular}
    \caption{Table of Notations used}
    \end{center}
    \label{tab:TableOfNotation}
\end{table}

\subsection{Linear Bandit}

Consider $\mathcal{A}$, a finite subset within the $d$-dimensional real space $\mathbb{R}^d$, representing the set of available actions or "arms." Let $K = |\mathcal{A}|$ denote the total number of arms, with ${x_1, ..., x_K} = \mathcal{A}$. The terminology "arm $x_i$" and "arm $i$" are used interchangeably for simplicity. The expected reward for choosing arm $x_i$ is given by $\mu_i = \langle \theta, x_i \rangle$, which is the inner product of the arm's feature vector $x_i$ and an unknown parameter vector $\theta$.

We introduce the notion of an {\it R-sub-Gaussian} distribution, a common assumption in the analysis of bandit algorithms. Formally, a distribution $\nu$ is considered R-sub-Gaussian if for any real number $t$,
$$
\forall t\in\mathbb{R}, \quad  \mathbb{E}_{X \sim \nu}[\exp (tX - t\mathbb{E}[X])] \leq \exp \left(\frac{R^2t^2}{2}\right)
$$
This condition ensures a controlled behavior of reward distributions, allowing us to derive theoretical guarantees. For the sake of clarity and without loss of generality, we set the variance parameter $R = 1$ throughout our analysis. Upon selecting an arm, the algorithm receives a reward sampled from an R-sub-Gaussian distribution $\nu_i$ with mean $\mu_i$.
Additionally, we assume that the Euclidean norm of each arm is bounded, specifically, $\forall i \in [K], \lVert x_i \rVert \leq L$. This constraint on the arm vectors is crucial for ensuring the feasibility of the learning problem within our defined space.

\subsection{Thresholding Bandit}

This subsection elucidates the objective within the thresholding bandit problem framework, adhering to the notation conventions established in \cite{pmlr-v48-locatelli16}. The primary goal revolves around categorizing all arm means relative to a specified threshold value. Let $\tau \in \mathbb{R}$ represent the threshold, and let $\epsilon \geq 0$ denote the precision level. We define $S_u$ as the subset of arms from $\mathcal{A}$ whose mean values exceed $u$, i.e., $S_u = \{ x_i \in \mathcal{A} | \mu_i \geq u\}$. Conversely, $S_u^C$ comprises those arms with mean values below $u$. The task is to accurately identify arms within $S_{\tau + \epsilon}$ and $S_{\tau - \epsilon}^C$, thus classifying arms with a precision up to $\epsilon$.

To estimate the set $S_u$ at round $t$, we introduce $\hat{S}_u(t) = \{ x_i \in \mathcal{A} | \hat{\mu}_i(t) \geq u\}$. The effectiveness of an algorithm after $T$ rounds is quantified by its loss, expressed as
$$
\mathcal{L} (T) = \mathbb{I}(S_{\tau + \epsilon} \cap \hat{S}_\tau(T)^C \neq \emptyset \vee S_{\tau - \epsilon}^C \cap \hat{S}_\tau(T) \neq \emptyset)
$$
which underscores the necessity for precise classification of arms whose mean rewards are either above $\tau + \epsilon$ or below $\tau - \epsilon$. The inherent randomness in arm sampling influences this loss metric. Consequently, minimizing the expected loss, defined as
$$
\mathbb{E}[\mathcal{L} (T)] = \mathbb{P}(S_{\tau + \epsilon} \cap \hat{S}_\tau(T)^C \neq \emptyset \vee S_{\tau - \epsilon}^C \cap \hat{S}_\tau(T) \neq \emptyset)
$$
becomes the algorithm's overarching objective.

Furthermore, acknowledging the varying complexities inherent to different scenarios is critical. Hence, we introduce a measure of problem complexity, inspired by \cite{pmlr-v48-locatelli16}, which is formulated as
$$
H = \sum_{i=1}^K \Delta_i^{-2} = \sum_{i=1}^K (\lvert\mu_i - \tau \rvert + \epsilon)^{-2}
$$
highlighting how $H$ is contingent upon both the threshold $\tau$ and the precision $\epsilon$. This complexity metric serves as a vital parameter in understanding and tackling the thresholding bandit problem.

\section{Algorithm}

\begin{algorithm}[t]
\label{alg:cap}
    \begin{algorithmic}
    \Require $n \geq 0$
    \State $V_0 \gets I$
    \State $b_0 \gets 0$
    \State $\hat{\theta}_0 \gets 0$ 
    \For{$t = 1,...,T$}
        \If {$t \in [K]$} 
            \State Pull arm $t$
        \Else
            \State Pull arm $x_{\phi(t)} = \arg\min_{k \leq K} B_k(t)$
        \EndIf
            \State Observe reward $r_t \sim \nu_{\phi(t)}$
            \State $V_t \gets V_{t-1} + x_{\phi(t)} x_{\phi(t)}^T$
            \State $b_t \gets b_{t-1} + r_t x_{\phi(t)}$
            \State $\hat{\theta}_t \gets V_{t}^{-1}b_t$
    \EndFor
    \end{algorithmic}
    \caption{LinearAPT}
\end{algorithm}

\subsection{LinearAPT}

We propose our algorithm, LinearAPT, which is derived from the APT (Anytime Parameter-free Thresholding) algorithm~\cite{pmlr-v48-locatelli16}. We also provide an upper bound on the expected loss, which will be presented in the next section. Unlike the scenario with unstructured bandit problems where the estimated mean, $\hat{\mu}_s$, is calculated as the sum of all previous rewards, our context requires a more nuanced approach due to the linear relationship between the reward and the arm vector. To this end, after $s$ rounds, given the pairings of arm vectors and rewards, $(x_{\phi(t)}, r_t)_{t \leq s}$, we aim to estimate the hidden parameter $\theta$.

This estimation is achieved by addressing the optimization problem:
$$
\hat{\theta}_s = {\arg\min}_{\theta \in \mathbb{R}^d} \left( \sum_{t=1}^s \left(r_t - \langle \theta, x_{\phi(t)} \rangle\right)^2 + \lambda \lVert \theta \rVert^2 \right)
$$

which essentially performs a regularized regression on the available data. The regularization term, $\lambda \lVert \theta \rVert^2$, ensures uniqueness of the solution by preventing ill-conditioned outcomes when the set ${x_{\phi(1)}, ..., x_{\phi(s)}}$ does not span the entire space $\mathbb{R}^d$.

The solution to this optimization problem can be efficiently computed in a recursive manner using the following formulations:
$$
b_0 = 0, \quad b_t = b_{t-1} + r_t x_{\phi(t)}
\quad \text{where} \quad \forall t \enspace b_t \in \mathbb{R}^d
$$
$$
V_0 = \lambda I, \quad V_t = V_{t-1} + x_{\phi(t)} x_{\phi(t)}^T 
\quad \text{where} \quad \forall t \enspace V_t \in \mathbb{R}^{d\times d}
$$

The recursive solution yields $\hat{\theta}_s = V_s^{-1} b_s$ as the optimal estimate for the hidden parameter $\theta$, demonstrating the efficiency and effectiveness of LinearAPT in leveraging linear relationships in bandit problems.


\subsection{Theoretical Bound}

Before presenting our principal theorem, we introduce several lemmas that lay the foundation for our analysis:

\begin{lemma} (\cite{NIPS2014_f387624d}) For any $\delta > 0$, with probability at least $1-\delta$:
$$
\forall t \geq 0, \forall x \in \mathbb{R}^d \quad |x^T\theta - x^T\hat{\theta}_t| \leq \lVert x\rVert_{V_t^{-1}}\left(\sqrt{d\log\left(\frac{1+tL^2}{\delta}\right)} + \lVert\theta\rVert\right)
$$
\end{lemma}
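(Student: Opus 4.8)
The plan is to reconstruct the standard confidence-ellipsoid argument for regularized least squares in linear bandits (the self-normalized approach of Abbasi-Yadkori, P\'al and Szepesv\'ari), since the statement is exactly that result specialized to $\lambda = 1$ (matching $V_0 = I$ in the algorithm). First I would decompose the estimation error. Writing $r_s = \langle \theta, x_{\phi(s)}\rangle + \eta_s$ with $\eta_s$ the conditionally $1$-sub-Gaussian noise, and using $b_t = \sum_{s\le t} r_s x_{\phi(s)}$ together with $V_t = I + \sum_{s\le t} x_{\phi(s)} x_{\phi(s)}^{T}$, a short computation gives
\begin{equation}
\hat\theta_t - \theta = V_t^{-1} S_t - V_t^{-1}\theta, \qquad S_t := \sum_{s\le t}\eta_s x_{\phi(s)}.
\end{equation}
Hence for any fixed $x$, two applications of Cauchy--Schwarz in the inner product induced by $V_t^{-1}$ yield
\begin{equation}
|x^{T}\theta - x^{T}\hat\theta_t| \le \|x\|_{V_t^{-1}}\big(\|S_t\|_{V_t^{-1}} + \|\theta\|_{V_t^{-1}}\big),
\end{equation}
and since $V_t \succeq I$ we have $\|\theta\|_{V_t^{-1}} \le \|\theta\|$, which already produces the $\|\theta\|$ summand. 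Note the right-hand side depends on $x$ only through the explicit factor $\|x\|_{V_t^{-1}}$, so the ``for all $x$'' is automatic once the remaining term is controlled uniformly in $t$.

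The crux is bounding the stochastic term $\|S_t\|_{V_t^{-1}}$ uniformly over all $t \ge 0$ on an event of probability at least $1-\delta$. This is precisely the self-normalized martingale tail inequality: applying the method of mixtures (pseudo-maximization of the supermartingale $\exp(\langle \lambda', S_t\rangle - \tfrac12\|\lambda'\|_{V_t}^{2})$ against a Gaussian prior on $\lambda'$) gives, with probability at least $1-\delta$,
\begin{equation}
\|S_t\|_{V_t^{-1}}^{2} \le 2\log\!\Big(\tfrac{\det(V_t)^{1/2}}{\delta}\Big) \qquad \text{for all } t \ge 0.
\end{equation}
I expect this to be the main obstacle: it is the only genuinely probabilistic step, it requires the correct filtration/predictability setup ($x_{\phi(s)}$ measurable with respect to the past and $\eta_s$ conditionally $1$-sub-Gaussian), and it is where both the $\log(1/\delta)$ and the $\det(V_t)$ factor enter. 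In the write-up I would simply invoke this as the cited result rather than redo the mixture argument.

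It then remains to turn $\det(V_t)$ into the clean $1+tL^2$ form. Since $V_t = I + \sum_{s \le t} x_{\phi(s)} x_{\phi(s)}^{T}$ with $\|x_{\phi(s)}\| \le L$, we have $\mathrm{tr}(V_t) \le d + tL^2$, so by AM--GM on the eigenvalues $\det(V_t) \le (1 + tL^2/d)^{d} \le (1+tL^2)^{d}$. Substituting into the bound above gives $\|S_t\|_{V_t^{-1}}^{2} \le d\log(1+tL^2) + 2\log(1/\delta) \le d\log\!\big((1+tL^2)/\delta\big)$, hence $\|S_t\|_{V_t^{-1}} \le \sqrt{d\log((1+tL^2)/\delta)}$, and combining with the decomposition finishes the proof. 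The only delicate points are the implicit choice $\lambda = 1$ and the mildly lossy step $d\log(1+tL^2/d) + 2\log(1/\delta) \le d\log((1+tL^2)/\delta)$, which is immediate for $d \ge 2$ and can be checked separately for $d = 1$.
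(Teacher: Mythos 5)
Your reconstruction is correct and follows exactly the route by which the cited result is actually proved: the paper itself disposes of this lemma in one line by invoking Proposition 2 of Soare et al.\ with $\lambda=1$, which is in turn the Abbasi-Yadkori--P\'al--Szepesv\'ari self-normalized confidence ellipsoid that you rebuild (error decomposition $\hat\theta_t-\theta=V_t^{-1}S_t-V_t^{-1}\theta$, Cauchy--Schwarz in the $V_t^{-1}$ inner product, method-of-mixtures tail bound, determinant--trace inequality). The one loose step you flag --- absorbing $2\log(1/\delta)$ into $d\log(1/\delta)$, which genuinely fails for $d=1$ --- is a known cosmetic slack already present in the standard statement of that proposition, so there is nothing further to fix.
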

\begin{proof}
This is obtained by setting $\lambda = 1$ in Proposition 2 of \cite{NIPS2014_f387624d}.
\end{proof}

\begin{lemma}
(\cite{pmlr-v130-reda21a}) For any $t > 0$, for any $i \in [K]$ such that $T_i(t) > 0$, for all $x \in \mathbb{R}^n$:
$$
\lVert x\rVert^2_{(V_t)^{-1}} \leq x^T(I + T_i(t)x_ix_i^T)^{-1}x
$$
\end{lemma}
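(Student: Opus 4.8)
The plan is to reduce the claim to a single matrix inequality in the positive semidefinite (Loewner) order and then invoke the fact that matrix inversion reverses that order. Set $M_i := I + T_i(t)\, x_i x_i^T$. Since $M_i$ is the sum of the identity and a nonnegative multiple of a rank-one positive semidefinite matrix, it is positive definite; likewise $V_t$ is positive definite because, with the choice $\lambda = 1$ fixed in Lemma 1, we have $V_0 = I$ and every update adds a positive semidefinite term. Hence both quadratic forms in the statement are well defined, and it suffices to prove $V_t^{-1} \preceq M_i^{-1}$, which is exactly the assertion $x^T V_t^{-1} x \le x^T M_i^{-1} x$ for every $x$.

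First I would unfold the recursion: with $\lambda = 1$, $V_t = I + \sum_{s=1}^{t} x_{\phi(s)} x_{\phi(s)}^T$. Grouping the rounds by which arm was pulled rewrites this as $V_t = I + \sum_{k=1}^{K} T_k(t)\, x_k x_k^T$, and splitting off the $i$-th summand gives $V_t = M_i + \sum_{k \ne i} T_k(t)\, x_k x_k^T$. Each remaining term $T_k(t)\, x_k x_k^T$ is positive semidefinite, so the tail sum is positive semidefinite, and therefore $V_t \succeq M_i$ with $0 \prec M_i$.

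Next I would apply the operator-antitonicity of the inverse on positive definite matrices: from $0 \prec M_i \preceq V_t$ it follows that $V_t^{-1} \preceq M_i^{-1}$. If a self-contained argument is wanted, conjugate by $M_i^{-1/2}$: the relation $M_i \preceq V_t$ is equivalent to $I \preceq M_i^{-1/2} V_t M_i^{-1/2}$, so every eigenvalue of $M_i^{-1/2} V_t M_i^{-1/2}$ is at least $1$, hence every eigenvalue of its inverse $M_i^{1/2} V_t^{-1} M_i^{1/2}$ is at most $1$, i.e. $M_i^{1/2} V_t^{-1} M_i^{1/2} \preceq I$; conjugating back by $M_i^{-1/2}$ yields $V_t^{-1} \preceq M_i^{-1}$. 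Evaluating this form inequality at the given $x$, together with $\lVert x\rVert^2_{V_t^{-1}} = x^T V_t^{-1} x$, completes the proof.

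There is no genuinely hard step here; the only thing requiring care is ensuring that all matrices involved are strictly positive definite, so that inverses and square roots exist and the antitonicity lemma applies — this is exactly what the $+I$ coming from the regularizer $\lambda = 1$ guarantees. I would also remark that the hypothesis $T_i(t) > 0$ is not needed for the inequality itself (it holds trivially when $T_i(t) = 0$), but it is the regime in which the bound is used in the sequel.
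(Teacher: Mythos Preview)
Your argument is correct. The paper, however, does not give a self-contained proof of this lemma at all: it simply invokes Lemma~11 of \cite{pmlr-v130-reda21a} and specializes to $\lambda = 1$. What you do instead is supply the underlying linear-algebra content directly, by expanding $V_t = I + \sum_k T_k(t)\,x_k x_k^T$, observing the Loewner domination $V_t \succeq I + T_i(t)\,x_i x_i^T$, and applying the operator-antitonicity of the inverse on positive definite matrices. This buys a fully self-contained treatment that does not require the reader to look up the cited reference, and it makes transparent why the regularizer $\lambda = 1$ (i.e.\ the $+I$) is what guarantees invertibility throughout. Your side remark that the hypothesis $T_i(t) > 0$ is not actually needed for the inequality is also accurate.
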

\begin{proof}
This is concluded by setting $\lambda = 1$ in Lemma 11 of \cite{pmlr-v130-reda21a}.
\end{proof}

\begin{lemma}
For any $t > 0$ and for any arm $i \in [K]$ with $T_i(t) > 0$, we have
$$
\lVert x_i\rVert_{V_t^{-1}} \leq \frac{1}{\sqrt{T_i(t)}}
$$
\end{lemma}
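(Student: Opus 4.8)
The plan is to chain Lemma~2 with an explicit Sherman--Morrison computation. Lemma~2, applied with $x = x_i$, already gives
$$
\lVert x_i\rVert^2_{V_t^{-1}} \leq x_i^T\bigl(I + T_i(t)x_ix_i^T\bigr)^{-1}x_i,
$$
so it suffices to show the right-hand side is at most $1/T_i(t)$, and then take square roots. This reduces a statement about the (complicated, data-dependent) design matrix $V_t$ to a statement about a rank-one update of the identity, which can be handled in closed form.

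First I would invoke the Sherman--Morrison identity to write
$$
\bigl(I + T_i(t)x_ix_i^T\bigr)^{-1} = I - \frac{T_i(t)\,x_ix_i^T}{1 + T_i(t)\lVert x_i\rVert^2},
$$
which is valid since $T_i(t) > 0$ makes the scalar $1 + T_i(t)\lVert x_i\rVert^2$ strictly positive. Plugging this in and simplifying the resulting scalar expression yields
$$
x_i^T\bigl(I + T_i(t)x_ix_i^T\bigr)^{-1}x_i = \lVert x_i\rVert^2 - \frac{T_i(t)\lVert x_i\rVert^4}{1 + T_i(t)\lVert x_i\rVert^2} = \frac{\lVert x_i\rVert^2}{1 + T_i(t)\lVert x_i\rVert^2}.
$$

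The final step is the elementary bound $\dfrac{\lVert x_i\rVert^2}{1 + T_i(t)\lVert x_i\rVert^2} \leq \dfrac{1}{T_i(t)}$, which after clearing the (positive) denominators is just $T_i(t)\lVert x_i\rVert^2 \leq 1 + T_i(t)\lVert x_i\rVert^2$, i.e.\ $0 \leq 1$. Combining the three displays and taking square roots gives $\lVert x_i\rVert_{V_t^{-1}} \leq 1/\sqrt{T_i(t)}$. Honestly I do not expect a genuine obstacle here — the content is entirely in Lemma~2, and the rest is bookkeeping; the only things to be careful about are confirming the Sherman--Morrison denominator is nonzero (guaranteed by $T_i(t) > 0$) and noting that the degenerate case $x_i = 0$ is trivial since then the left-hand side is $0$.
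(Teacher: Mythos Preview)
Your proposal is correct and essentially identical to the paper's own proof: invoke Lemma~2 with $x = x_i$, apply Sherman--Morrison to the rank-one perturbation of the identity, simplify to $\lVert x_i\rVert^2/(1 + T_i(t)\lVert x_i\rVert^2)$, and bound by $1/T_i(t)$. The paper handles the $x_i = 0$ case up front rather than as an afterthought, but otherwise the arguments match line for line.
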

\begin{proof}
    See Appendix~\ref{app:lemma3}.
\end{proof}

With these preliminaries established, we present our main result as follows:
\begin{theorem}
Assume all arm distribution is $1$-sub-gaussian and $\lVert x_i\rVert \leq L$, $\lVert\theta\rVert < \sqrt{\frac{T}{\gamma^2 H}}$ where $\gamma = 4$. Algorithm LinearAPT's expected loss is upper bounded by
$$
\mathbb{E}[\mathcal{L}(T)] \leq \exp\left\{\log(1 + TL^2) - \frac{1}{d}\left(\sqrt{\frac{T}{\gamma^2H}} - \lVert\theta\rVert\right)^2\right\}
$$
\end{theorem}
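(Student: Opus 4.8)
The plan is to follow the APT analysis: exhibit a high-probability event on which all empirical quantities track their population counterparts, show that on that event the greedy index rule $\phi(t)=\arg\min_k B_k(t)$ samples every arm often enough to preclude any misclassification once the budget clears the stated threshold, and finally tune the confidence parameter $\delta$ so that the residual failure probability equals the claimed exponential.

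First I would fix $\delta\in(0,1)$, set $\rho=\sqrt{d\log\tfrac{1+TL^{2}}{\delta}}+\lVert\theta\rVert$, and define the clean event $\mathcal{E}$ on which, for all $t\le T$ and all arms $i$ with $T_i(t)>0$,
\[
\lvert\mu_i-\hat{\mu}_i(t)\rvert\;\le\;\frac{\rho}{\sqrt{T_i(t)}} .
\]
This follows by combining Lemma 1 with Lemma 3 and the bound $\log\tfrac{1+tL^{2}}{\delta}\le\log\tfrac{1+TL^{2}}{\delta}$ for $t\le T$, and $\mathbb{P}(\mathcal{E})\ge 1-\delta$. Since $\bigl\lvert\hat{\Delta}_i(t)-\Delta_i\bigr\rvert\le\lvert\hat{\mu}_i(t)-\mu_i\rvert$, on $\mathcal{E}$ the estimated gaps also stay within $\rho/\sqrt{T_i(t)}$ of the true gaps. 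As $\mathbb{E}[\mathcal{L}(T)]=\mathbb{P}(\mathcal{L}(T)=1)\le\delta+\mathbb{P}\bigl(\mathcal{L}(T)=1,\ \mathcal{E}\bigr)$, the task reduces to showing the second term vanishes for a suitable range of $\rho$.

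The core is a two-sided control, on $\mathcal{E}$, of $m\coloneqq\min_{k\in[K]}\sqrt{T_k(T)}\,\hat{\Delta}_k(T)$. For the upper bound, if some arm $i$ is misclassified at time $T$ — either $\mu_i\ge\tau+\epsilon$ with $\hat{\mu}_i(T)<\tau$, or $\mu_i<\tau-\epsilon$ with $\hat{\mu}_i(T)\ge\tau$ — a one-line case check gives $\lvert\hat{\mu}_i(T)-\mu_i\rvert\ge\hat{\Delta}_i(T)$, hence on $\mathcal{E}$ $\sqrt{T_i(T)}\,\hat{\Delta}_i(T)\le\rho$ and therefore $m\le\rho$. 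For the lower bound I would invoke the sampling rule: every arm is pulled once during the forced rounds $1,\dots,K$, so for an arm $j$ with at least one adaptive pull, at its last adaptive pull — a round $t>K$ at which $B_j(t)=\min_k B_k(t)$ and $T_j(t-1)=T_j(T)-1$ — I would compare with the arm $i^{\star}$ realizing $m$; trading every $\hat{\Delta}$ for $\Delta\pm\rho/\sqrt{T_{\cdot}}$ via $\mathcal{E}$ yields $B_{i^{\star}}(t)\le m+2\rho$ and $B_j(t)\ge\sqrt{T_j(T)-1}\,\Delta_j-\rho$, so $\sqrt{T_j(T)-1}\,\Delta_j\le m+3\rho$ and hence $T_j(T)-1\le(m+3\rho)^{2}\Delta_j^{-2}$ (trivially true when $j$ has no adaptive pull). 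Summing over $j$ against $H=\sum_j\Delta_j^{-2}$ gives $T-K\le(m+3\rho)^{2}H$, i.e. $m\ge\sqrt{(T-K)/H}-3\rho$. Combining the two bounds forces $4\rho\ge\sqrt{(T-K)/H}$; equivalently, if $\rho<\tfrac14\sqrt{(T-K)/H}=\sqrt{(T-K)/(\gamma^{2}H)}$ with $\gamma=4$, then on $\mathcal{E}$ no arm is misclassified.

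Finally I would optimize over $\delta$: the hypothesis $\lVert\theta\rVert<\sqrt{T/(\gamma^{2}H)}$ makes $\sqrt{T/(\gamma^{2}H)}-\lVert\theta\rVert$ positive, and letting $\sqrt{d\log\tfrac{1+TL^{2}}{\delta}}$ approach it from below drives $\rho$ below the threshold; solving for $\delta$ gives
\[
\delta=\exp\!\Bigl\{\log(1+TL^{2})-\tfrac1d\bigl(\sqrt{T/(\gamma^{2}H)}-\lVert\theta\rVert\bigr)^{2}\Bigr\},
\]
so $\mathbb{E}[\mathcal{L}(T)]\le\delta$, which is the claimed bound. I expect the lower bound on $m$ to be the main obstacle: propagating the $\mathcal{E}$-slack cleanly through the greedy comparison at the last-pull rounds, handling the forced-exploration phase, and keeping the "$T_j(T)$ versus $T_j(T)-1$" bookkeeping tight enough that the constants collapse to exactly $\gamma=4$. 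The one residual discrepancy is the additive $O(K)$ coming from forced exploration — the argument literally produces the bound with $T-K$ in place of $T$ — which I would either carry through the exponent or absorb into $\gamma$.
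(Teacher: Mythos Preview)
Your argument is sound and lands on the same constant $\gamma=4$, but the route differs from the paper's. The paper fixes $\delta$ to the target value at the outset, so the concentration radius becomes exactly $\sqrt{T/(\gamma^{2}T_i(t)H)}$, and then runs a pigeonhole: since $\sum_i T_i(T)=T$, some single ``helpful'' arm $k$ must satisfy $T_k(T)\ge T/(H\Delta_k^{2})$; at the last pull $t^{*}$ of this one arm the comparison $B_k(t^{*})\le B_i(t^{*})$ is made once and propagates a lower bound $T_i(T)\ge\bigl(\tfrac{\gamma-2}{\gamma}\bigr)^{2}T/(H\Delta_i^{2})$ to every $i$, yielding $\lvert\mu_i-\hat\mu_i(T)\rvert\le\Delta_i/2$ directly. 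You instead keep $\delta$ abstract until the end and sandwich $m=\min_k\sqrt{T_k(T)}\,\hat\Delta_k(T)$: the upper bound comes from a hypothetical misclassified arm, the lower bound from summing $T_j(T)-1\le(m+3\rho)^{2}\Delta_j^{-2}$ over \emph{all} arms at their respective last-pull rounds. The pigeonhole is marginally slicker (a single comparison round suffices and no sum is needed), while your sum-over-arms decomposition is more systematic and makes the forced-exploration cost explicit as $T-K$. On that last point you are in fact being more careful than the paper, which silently identifies $T_k(t^{*}-1)$ with $T_k(t^{*})$ and never discusses what happens if the helpful arm's last pull lies in the forced phase; the $T$ versus $T-K$ discrepancy you flag is therefore not a shortcoming of your argument relative to theirs.
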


\begin{proof}
    See Appendix~\ref{app:theorem1}.
\end{proof}

\subsection{Analysis}

As demonstrated in Table \ref{tab:lower-bound}, the upper bound derived for our approach does not depend on $K$, the number of arms. This characteristic aligns with our intuition, as our primary objective involves estimating the underlying parameter rather than the mean of each arm individually. If a reliable approximation of the original parameter is obtained, it enables immediate estimation for any additional arms introduced subsequently. This contrasts sharply with the unstructured scenario, where the absence of auxiliary information necessitates a bound that escalates with $K$.

However, our upper bound incorporates a term $-1/d$, which emerges from the $\sqrt{d}$ component found in Proposition 2 of \cite{DBLP:journals/corr/SoareLM14}. Previous studies have illuminated that this term represents a compromise for achieving adaptability within the proposition. The essence of this trade-off lies in the observation that as the dimensionality increases, the estimated parameter encapsulates diminishing informational value, necessitating a larger budget to ensure the precision of the estimation.

Moreover, our proposed algorithm exhibits computational efficiency. The primary computational demand arises from the inversion of a $d \times d$ matrix required during the update of the estimated parameter $\hat{\theta}$. Consequently, the overall computational complexity of our algorithm is $\mathcal{O}(Td^\phi)$, where $\phi$ is contingent upon the specific matrix inversion technique employed. This feature underscores the practicality of our algorithm, especially in contexts where computational resources are a consideration.

\begin{center}
\begin{table}[t]
\centering
    \bgroup
    \def\arraystretch{1.5}
    \begin{tabular}{ c|c } 
        Scenario & Upper bound \\ 
        \hline
        Unstructured \cite{pmlr-v48-locatelli16} &  $ \mathbb{E}[\mathcal{L}(T)] \asymp \left( -\frac{T}{H} + \log (\log (T)K )\right)$ \\ 
        \hline
        Graph  \cite{pmlr-v108-lejeune20a} & $ \mathbb{E}[\mathcal{L}(T)] \asymp \left( -\frac{T}{H} +  d_T \log (T)\right)$ \\
        \hline
        Linear & $ \mathbb{E}[\mathcal{L}(T)] \asymp \left( -\frac{T}{dH} + \log (T)\right)$  \\ 
    \end{tabular}
    \egroup
\caption{A comparison on theoretical upper bound with other paper solving different variants of thresholding bandit problem.}
\label{tab:lower-bound}
\end{table}
\end{center}

\section{Experiments}

\subsection{Setup}
To empirically estimate $\mathbb{E}[\mathcal{L}(T)]$, the expected loss over time $T$, we employ an experimental approximation method defined as follows:
$$
\mathbb{E}[\mathcal{L}(T)] \approx \frac{1}{N}\sum_{i = 1}^{N} \mathcal{L}_i(T)
$$
where $\mathcal{L}_i(T)$ denotes the loss observed in the $i$th simulation. In our experiments, we have chosen $N = 10,000$ to ensure statistical significance. The failure rate obtained through this method serves as an estimate of our algorithm's performance.

\subsection{Dataset}

Our evaluation compares the effectiveness of our algorithm against established baseline methods across two synthetic datasets and two real-world datasets. This approach allows us to rigorously assess our algorithm's adaptability and performance in both controlled and practical scenarios.

\paragraph{Synthetic Dataset} All arm vector $\{x_1, ..., x_K\} = \mathcal{X}$ is sampled from the uniform distribution $\mathcal{U}\left([-1,1]^{d}\right)$. The parameter $\theta$ is also random sampled from the same distribution. We set $K=20, \tau=0$ and $\epsilon = 0.01$. We measure the performance of algorithms at $T = 40, 80, 120, 160, 200$. To understand the effect of dimension on the algorithm's performance, we perform experiments on $d=5, 20$ respectively.

\paragraph{Real-world Dataset} For our experiment, we use a modified version of iris and wine dataset from sklearn. 
While these two datasets were initially designed for classification tasks, we can transform them into linear regression datasets by sampling a random linear function. This approach follows the methodology commonly used in the literature, as seen in ~\cite{dudik2011doubly,da2022fast,tsai2023differential,tsai2024lil}. For example, a data instance in iris has the shape of $(x, y)\in \mathbb{R}^{4} \times \{1,2,3\}$ where $x$ is the feature and $y$ is the label. A synthetic parameter $\theta$ is randomly sampled from $\mathcal{U}\left([-1,1]^{4}\right)$, the pseudo reward for $x$ is then defined as $\langle x, \theta\rangle$. The point is the distribution of arm is more natural when the dataset is constructed in this way. We set the threshold as the mean value of the rewards $\frac{1}{N}\sum_{i=1}^N \langle \theta, x_i\rangle$ and the precision $\epsilon = 0.1$. The budget is $T = 200, 250, 300, 350, 400$.

\subsection{Baseline}

We first introduce two baselines, Random and APT. As explained in the following section, linearAPT can be viewed as an composition of these two algorithm. This makes important to compare linearAPT with these two algorithm, as it serves like an ablation study for linearAPT.
\paragraph{Random} This algorithm has a random arm selection rule, that is, the next exampled arm is determined by a random sample from $[K]$. the algorithm also exploit the knowledge of the underlying linear structure by estimating the hidden parameter $\theta$ by solving the optimization problem mentioned earlier. In other words, the only difference between Random and LinearAPT is the selection rule in line 8.

\paragraph{APT} APT is the algorithm introduced by \cite{pmlr-v48-locatelli16}. Since this algorithm solves the unstructured version of TBP, it is easy to assume that linearAPT performs better under datasets with linear structure. linearAPT coincides with APT when the estimated mean is calculated by the average of the reward history of the target arm, not by the inner product between the arm vector and the estimated parameter.

\paragraph{UCBE} We also follow \cite{pmlr-v48-locatelli16} and include UCB-type algorithm as our baseline. The UCBE algorithm was introduced as a baseline in \cite{pmlr-v48-locatelli16}. The selection rule in line 8 of our algorithm was changed into $\arg\min_{k} \hat\Delta_k - \sqrt{\frac{a}{T_k(t)}}$. similar to \cite{pmlr-v48-locatelli16}, we set $a = 4^i\frac{T-K}{H}$ for $i = \{-1, 0, 4\}$. Note that in a realistic scenario, it is not possible to calculate $a$ using $i$ since we do not have access to the complexity $H$.


\begin{figure}[h]
  \begin{minipage}{0.5\textwidth}
    \includegraphics[width=\textwidth]{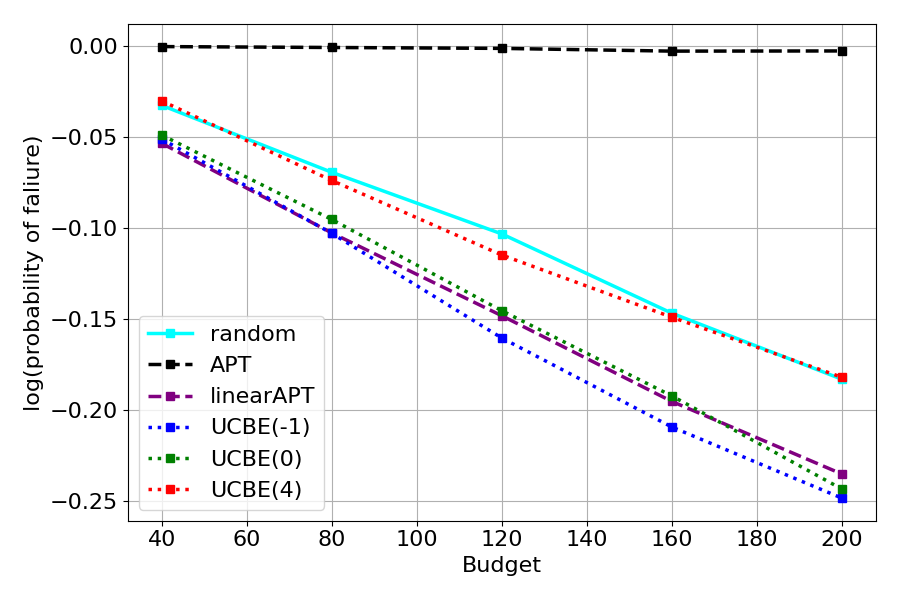}
    \captionsetup{labelformat=empty}
    \caption{(a) Uniform Box, $d = 5$}
  \end{minipage}
  \hfill
  \begin{minipage}{0.48\textwidth}
    \includegraphics[width=\textwidth]{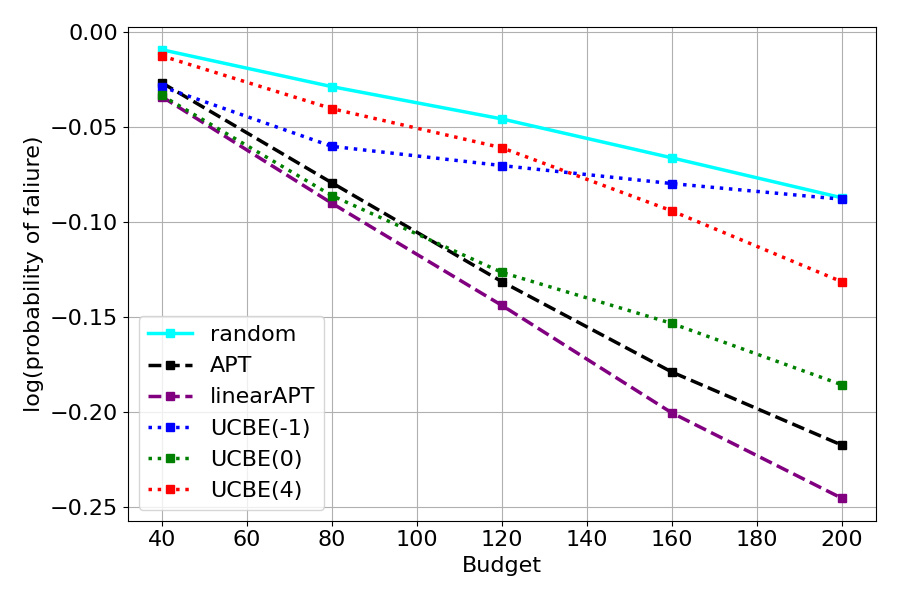}
    \captionsetup{labelformat=empty}
    \caption{(b) Uniform Box, $d = 20$}
  \end{minipage}
  \caption{LinearAPT demonstrates competitive performance in the experiment results on synthetic data. The y-axis represents the logarithm of the probability of failure. The left and right figures represent the experimental results for $d=5$ and $d=20$ respectively. When $d$ is large, the performance of unstructured algorithms improves.}
  \label{fig:synthetic}
\end{figure}

\begin{figure}[h]
  \begin{minipage}{0.5\textwidth}
    \includegraphics[width=\textwidth]{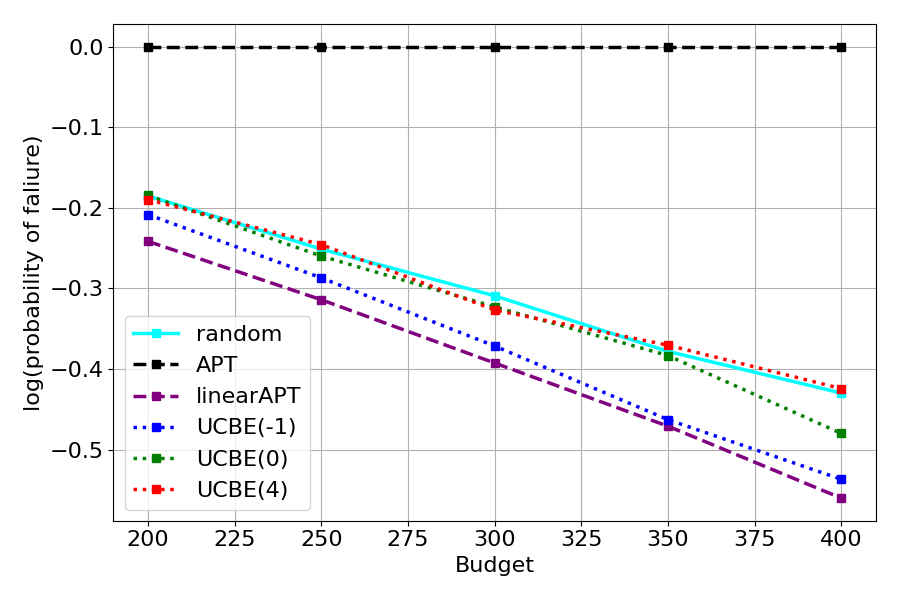}
    \captionsetup{labelformat=empty}
    \caption{(a) Modified version of iris dataset}
  \end{minipage}
  \hfill
  \begin{minipage}{0.48\textwidth}
    \includegraphics[width=\textwidth]{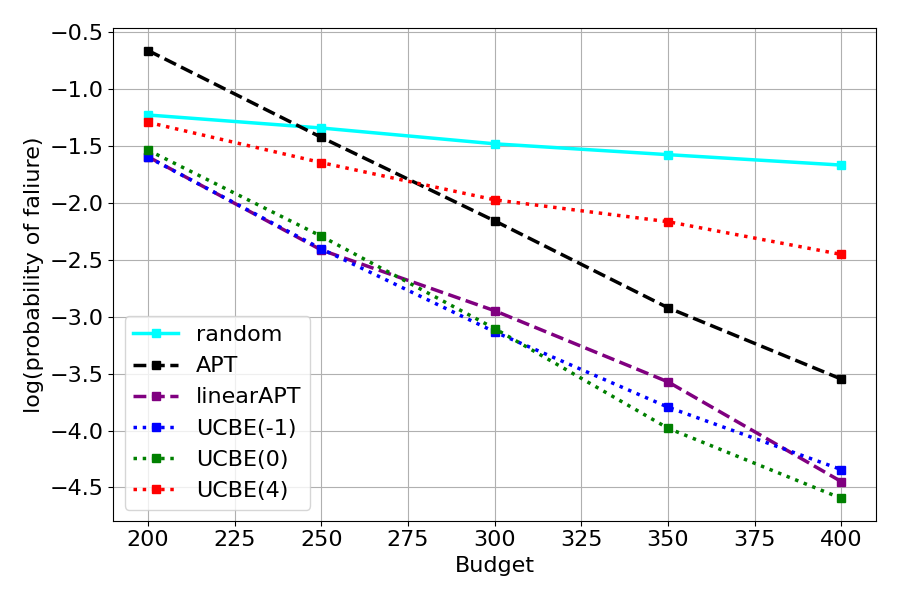}
    \captionsetup{labelformat=empty}
    \caption{(b) Modified version of wine dataset}
  \end{minipage}
  \label{fig:realworld}
  \caption{LinearAPT demonstrates competitive performance on real-world data. The left and right figures represent the experimental results for the iris and wine datasets, respectively. Similar behavior to the synthetic data can be observed.}
\end{figure}

\subsection{Results}

The performance of LinearAPT across various settings, as depicted in the figures, showcases its competitiveness. To ensure a thorough evaluation, the budget for each dataset was intentionally set to exceed the total number of arms, accommodating our algorithm's prerequisite for sampling each arm at least once. Specifically, for a synthetic dataset with $K=20$ arms, the budget commenced at $T=40$. Similarly, for the Iris and Wine datasets, featuring $K=150$ and $178$ arms respectively, the starting budget was established at $T=200$.

With carefully tuned hyperparameter $i$, UCBE can compete with our algorithm's performance. In 3 out of 4 of our experiments, UCBE(-1) can compete with LinearAPT, and in 3 out of 4 of our experiments, UCBE(0) can compete with LinearAPT. The problem is when facing an unseen problem, we usually do not know which hyperparameter $i$ is better. Secondly, even if we have knowledge about which $i$ is better, we cannot calculate $a = 4^i\frac{T-K}{H}$ without access to complexity $H$. We can view LinearAPT as a one-for-all universal solution to this problem.

In the synthetic dataset scenario, illustrated in \ref{fig:synthetic}, we observe that as the dimensionality increases from $5$ to $20$, the APT algorithm begins to rival our LinearAPT approach. This aligns with the earlier discussion that higher dimensions necessitate a greater number of samples before the estimation of the parameter becomes sufficiently accurate. The presence of a $-d/T$ term in our upper bound suggests that LinearAPT excels when this term is significant. This pattern is also evident in real-world datasets, where the dimensionality of the action space is $4$ for the Iris dataset and $13$ for the Wine dataset, indicating a similar trend in performance across varying dimensions.
 
Additionally, an interesting observation arises when comparing the random algorithm to UCBE in low-dimensional settings ($d=5$). In such cases, any arbitrary direction contributes to refining the estimation of the regression parameter, rendering the random algorithm surprisingly competitive. This highlights the nuanced interplay between dimensionality and algorithm performance, underscoring the adaptability of LinearAPT in both synthetic and real-world environments.


\section{Conclusion}

In this paper, we introduce LinearAPT, a theoretically justified algorithm that addresses a novel variant of a classic bandit pure exploration problem. The algorithm demonstrates competitive performance on both synthetic and real-world datasets and is computationally efficient. Its computational complexity and theoretical upper bound scale well with the dataset size, making it a practical choice for handling large datasets.

The theoretical lower bound of the fixed confidence TLB problem was established by previous work \cite{pmlr-v119-degenne20a}. However, as of our current knowledge, the lower bound of the fixed-budget TLB problem remains unknown. We also lack knowledge on whether the lower bound for the fixed confidence case can be transformed into our setting. Further investigation has been left for future work.

For the algorithm design and proof, we draw significant inspiration from \cite{pmlr-v48-locatelli16} and \cite{pmlr-v108-lejeune20a}. Exploring whether similar techniques generalize to other structured bandit settings represents a promising avenue for future research.

%

\bibliography{main}
\bibliographystyle{rlc}


\clearpage
\appendix

\section{Proof of Theorem 1}
\label{app:theorem1}

\begin{proof} The proof of theorem 1 follows a similar structure of \cite{pmlr-v48-locatelli16} and \cite{pmlr-v108-lejeune20a}. Our proof is divided into four steps:\\
\noindent \textbf{Step 1: High probability bound} \quad Assume $\lVert\theta\rVert < \sqrt{\frac{T}{\gamma^2H}}$ and set
$$
\delta = \exp\left\{\log(1 + TL^2) - \frac{1}{d}\left(\sqrt{\frac{T}{\gamma^2H}} - \lVert\theta\rVert\right)^2\right\}
$$
Then for any arm $i$ and any $t>0$:

\begin{align*}
\lvert\mu_i - \hat{\mu}_i(t)\rvert  &= |x_i^T\theta - x_i^T\hat{\theta}_t| \\
&\leq \lVert x_i\rVert_{V_t^{-1}}\left(\sqrt{d\log\left(\frac{1+tL^2}{\delta}\right)} + \lVert\theta\rVert\right) \\
&\leq \frac{1}{\sqrt{T_i(t)}}\left(\sqrt{d\log\left(\frac{1+TL^2}{\delta}\right)} + \lVert\theta\rVert\right) \\
&\leq  \sqrt{\frac{T}{\gamma^2T_a(t)H}} 
\end{align*}

\noindent \textbf{Step 2: Characterization of a helpful arm} \quad There exists an arm $k$ at time $T$ such that $T_k(T) \geq \frac{T}{H\Delta_k^2}$. Otherwise we get
$$
T = \sum_{i=1}^K T_i(T) < \sum_{i=1}^K\frac{T}{H\Delta_i^2} = T
$$
Consider the last time $t^* \leq T$ the arm $k$ was pulled. then
$$
T_k(t^*) = T_k(T) \geq \frac{T}{H\Delta_k^2}
$$

\noindent \textbf{Step 3: Bounding other arms using the helpful arm} \quad From the triangle inequality and the definitions we have
\begin{align*}
\lvert\hat{\mu}_i(t) - \mu(i)\rvert &= \lvert(\hat{\mu}_i(t) - \tau) - (\mu_i - \tau)\rvert \\&\geq \left\lvert\lvert\hat{\mu}_i(t) - \tau\rvert - \lvert\mu_i - \tau\rvert \right\rvert\\
&= \left\lvert(\lvert\hat{\mu}_i(t) - \tau\rvert  + \epsilon) - (\lvert\mu_i - \tau\rvert + \epsilon) \right\rvert\\
&= \lvert\hat{\Delta}_i(t) - \Delta_i\rvert
\end{align*}
Combining this with the bound we get in step 1 we get
$$
\Delta_i - \sqrt{\frac{T}{\gamma^2T_i(t)H}}  \leq \hat{\Delta}_i(t) \leq \Delta_i + \sqrt{\frac{T}{\gamma^2T_i(t)H}} 
$$
By the construction of the algorithm, we know that at time $t^*$, for any arm $i \in [K]$,
$$
B_k(t^*) \leq B_i(t^*)
$$
Notice that
\begin{align*}
B_i(t) &= \hat{\Delta}_i\sqrt{T_i(t)} \\
&\leq \left(\Delta_i + \sqrt{\frac{T}{\gamma^2T_i(t)H}} \right)\sqrt{T_i(t)} \\
&\leq \Delta_i\sqrt{T_i(t)} + \frac{1}{\gamma}\sqrt{\frac{T}{H}}
\end{align*}

\noindent And
\begin{align*}
 B_k(t) &\geq \hat{\Delta}_k(t)\sqrt{T_k(t)} \\
&\geq \left (\Delta_k - \sqrt{\frac{T}{\gamma^2T_k(t)H}}  \right )\sqrt{T_k(t)}\\
& \geq \left (\Delta_k - \frac{1}{\gamma}\Delta_k  \right )\sqrt{\frac{T}{H\Delta_k^2}} \\
& = \left (\frac{\gamma - 1}{\gamma}\right )\sqrt{\frac{T}{H}} \\
\end{align*}

Combine these three inequalities, we get
$$
\left (\frac{\gamma - 1}{\gamma}\right )\sqrt{\frac{T}{H}} \leq B_k(t^*) \leq B_i(t^*) \leq \Delta_i\sqrt{T_i(t^*)} + \frac{1}{\gamma}\sqrt{\frac{T}{H}}
$$
We get a lower bound on $T_i(T)$ from this
$$
\left (\frac{\gamma - 2}{\gamma}\right)^2\frac{T}{H\Delta_i^2} \leq T_i(t^*) \leq T_i(T)
$$

\noindent \textbf{Step 4: Conclusion} 
The last inequality in step 3 yields
$$
\sqrt{\frac{T}{T_i(T)H}} \leq \Delta_i\left (\frac{\gamma}{\gamma - 2}\right )
$$

\noindent Combine this with the bound on the mean and the estimated mean yields
$$
\lvert\mu_i - \hat{\mu}_i(t)\rvert \leq  \sqrt{\frac{T}{\gamma^2T_i(t)H}}  \leq \Delta_i\left (\frac{1}{\gamma - 2}\right ) = \frac{\Delta_i}{2}
$$

For arms such that $\mu_i \geq \tau + \epsilon$:
\begin{align*}
\hat{\mu}_i - \tau &\geq \mu_i(T) - \tau - \frac{\Delta_i}{2} \\
&\geq \mu_i - \tau - \frac{1}{2}\left(\mu_i - \tau + \epsilon\right) \\
&\geq \frac{1}{2}(\mu_i - \tau - \epsilon) \\
&\geq 0
\end{align*}

Similar argument holds for arms with $\mu_i <\tau - \epsilon$. We conclude that with high probability $1 - \delta$, all arms are correctly identified by the algorithm. As for the remaining event with probability $\delta$, we do not know whether the loss is $0$ or not. This yields
$$
\mathbb{E}[\mathcal{L}(T)] \leq \delta
$$
\end{proof}

\section{Proof of Lemma 3}
\label{app:lemma3}

\begin{proof}
    When $x_i = 0$, the above inequality is trivial. Assume $x_i \neq 0$:
    \begin{align*}
    \lVert x_i\rVert^2_{V_t^{-1}} &\leq \lVert x_i\rVert^2_{(I + T_i(t)x_ix_i^T)^{-1}} && \text{By lemma 1} \\
    &=  x_{i}^T   (I + T_i(t)x_ix_i^T)^{-1} x_{i} \\
    &= \lVert x_i\rVert^2 - \frac{x_i^T T_i(t)x_ix_i^T x_i}{1 + T_i(t)\lVert x_i\rVert^2}&& \text{By sherman-morrison formula} \\
    &= \lVert x_i\rVert^2 - \frac{T_i(t) \lVert x_i\rVert^4}{1 + T_i(t)\lVert x_i\rVert^2} \\
    &= \frac{\lVert x_i\rVert^2 + T_i(t)\lVert x_i\rVert^4 - T_i(t)\lVert x_i\rVert^4}{1 + T_i(t)\lVert x_i\rVert^2} \\
    &= \frac{\lVert x_i\rVert^2}{1 + T_i(t)\lVert x_i\rVert^2} \\
    &\leq \frac{\lVert x_i\rVert^2}{T_i(t)\lVert x_i\rVert^2} = \frac{1}{T_i(t)}\\
    \end{align*}
\end{proof}

\end{document}